\documentclass[letterpaper]{article} % DO NOT CHANGE THIS
\usepackage[]{aaai25}  % DO NOT CHANGE THIS
\usepackage{times}  % DO NOT CHANGE THIS
\usepackage{helvet}  % DO NOT CHANGE THIS
\usepackage{courier}  % DO NOT CHANGE THIS
\usepackage[hyphens]{url}  % DO NOT CHANGE THIS
\usepackage{graphicx} % DO NOT CHANGE THIS
\usepackage{natbib}  % DO NOT CHANGE THIS AND DO NOT ADD ANY OPTIONS TO IT
\usepackage{caption} % DO NOT CHANGE THIS AND DO NOT ADD ANY OPTIONS TO IT
\usepackage{algorithm}
\usepackage{algorithmic}

\usepackage{latexsym}
\usepackage{amssymb}
\usepackage{amsmath}
\usepackage{amsthm}
\usepackage{mathrsfs}
\usepackage{booktabs}
\usepackage{enumitem}
\usepackage{color}

\newtheorem{theorem}{Theorem}
\newtheorem{lemma}[theorem]{Lemma}
\newtheorem{remark}[theorem]{Remark}
\newtheorem{corollary}[theorem]{Corollary}

\newtheorem{definition}[theorem]{Definition}

\usepackage{newfloat}
\usepackage{listings}
\DeclareCaptionStyle{ruled}{labelfont=normalfont,labelsep=colon,strut=off} % DO NOT CHANGE THIS
\floatstyle{ruled}
\newfloat{listing}{tb}{lst}{}
\floatname{listing}{Listing}

\newcounter{figtab}

\makeatletter
\let\oldfigure\figure
\let\endoldfigure\endfigure
\renewenvironment{figure}
  {\refstepcounter{figtab}\oldfigure}
  {\endoldfigure}

\let\oldtable\table
\let\endoldtable\endtable
\renewenvironment{table}
  {\refstepcounter{figtab}\oldtable}
  {\endoldtable}
\makeatother
 
\title{Accounting for Context: Shaping Moral Credences for Value Alignment}
\author{Jazon Szabo and Sanjay Modgil
}
\affiliations{King's College London, UK
}

\begin{document}

\maketitle

\begin{abstract}
Ensuring that agent behaviours are aligned with human moral values inevitably raises the problem of how to account for the plurality of moral perspectives that societies -- and even individuals -- typically adopt. Work on \textit{moral uncertainty} proposes mechanisms to fairly and democratically aggregate evaluations of actions across different moral theories. However, this paper argues that one needs to account for \emph{contextual factors} when aggregating moral evaluations. For example, consequentialist perspectives assume an ability to accurately determine how an agent's actions change the world; an assumption that often does not hold in real world settings. We, therefore, formalise agent decision making under moral uncertainty, while also accounting for these kinds of contextual factors. We thereby show that a seemingly commonsensical property -- the \emph{weak Pareto principle} -- is violated.  We argue that this apparent problem is, in fact, a variation of \emph{Simpson's paradox}, and hence reveals the limitations of aggregation mechanisms that ignore the impact of contextual factors.
\end{abstract}

\section{Introduction}\label{Sec:introduction}

\emph{Value alignment} aims at ensuring that the behaviours of intelligent (and ultimately `superintelligent') agents are aligned with human ethical values \cite{gabriel2020artificial}. However, human societies rarely exhibit moral consensus; societies (and even individuals) evaluate actions according to different ethical theories that may advocate conflicting prescriptions \cite{haidt2012righteous,awad2018moral,macaskill2020moral}. \emph{Pluralistic alignment} \cite{DBLP:conf/icml/SorensenMFGMRYJ24} (also  called \emph{democratic value alignment} \cite{gabriel2020artificial}) is therefore concerned with aligning agents' actions in a way that respects the plurality of ethical values found in human societies.

A notable approach to value alignment adopts a multi-step pipeline\footnote{We present a modified and simplified version here.} \cite{DBLP:journals/ibmrd/NoothigattuBMCM19} that: 1) elicits individuals' ethical preferences in simulated, ethically-salient scenarios (e.g. the Moral Machine experiment \cite{awad2018moral}); 2) uses some form of inverse reinforcement learning \cite{IRL} to derive the individuals' reward (and utility) functions from these preferences;  3) aggregates these functions so as to appropriately account for the diverse preferences, where the results of aggregation are adopted as the agent's reward function.

This paper focuses on the third step with implications for the whole pipeline; specifically, we focus on formalisms inspired by recent philosophical studies of `\emph{moral uncertainty}'\cite{macaskill2020moral}. The core idea is to extend the standard decision-theoretic framework to account for uncertainty about which ethical theories should be used to evaluate actions. This uncertainty is expressed in the form of \textit{moral credences} in various ethical theories\footnote{Extending the notion of credences (subjective probabilities) standardly associated with beliefs \cite{BeliefCredences} to credences in moral theories.}. For example, given ethical theories $t_1$ and $t_2$, each assigning numerical evaluations to actions $a$ and $b$, these evaluations are aggregated using a social choice framework\footnote{Note that within pluralistic alignment, \emph{social choice} is often used to formally study the process of aggregating different values \cite{baum2020social,DBLP:journals/corr/abs-2410-23953,DBLP:conf/icml/ConitzerFHHJ0MP24}.}  that accounts for the relative proportion of individuals endorsing each theory. Intuitively, these proportions are expressed as  moral credences\footnote{Indeed, these credences may express an \textit{individual's} relative confidence in the appropriateness/suitability of the theories' possibly distinct evaluations}. Consequently, moral uncertainty has been used in AI research to aggregate the values of different groups within societies \cite{DBLP:journals/mima/Bogosian17,bhargava2017autonomous,DBLP:conf/icml/EcoffetL21,martinho2021computer}. In summary then, moral uncertainty advocates explicit reference to the plurality of ethical theories that account for the diversity of elicited human evaluations in the above mentioned pipeline.

In this paper, we examine how contextual factors influence the moral credences assigned to ethical theories and, consequently, how accounting for these factors impacts the outcomes of moral aggregation. Consider an agent and the question of whether the agent should evaluation actions from a deontological or utilitarian perspective. Suppose a population is evenly split: $50\%$ prefer a deontological evaluation and $50\%$ a consequentialist one, resulting in equal credence ($0.5$) assigned to each theory. These evaluations are understood as measures of the actions' overall `goodness'. However, such credences are typically elicited in response to generic scenarios -- such as those used in standard trolley problems \cite{foot1967problem} or large-scale studies like the Moral Machine experiment \cite{awad2018moral} -- where context-specific details are abstracted away. In practice, it is neither realistic nor adequate to ignore these contextual factors. For instance, in a particular setting, the availability of computational resources for accurately estimating the consequences of actions should influence the weight (i.e. moral credence) assigned to a consequentialist evaluation. This suggests that moral credences are not static, but should be dynamically adjusted in light of relevant contextual constraints.

\paragraph{Running Example} 

Let us present our running example; we use this example to demonstrate many of our results regarding contextual features. Consider a small, mobile firefighter robot called FROBO whose main aim is to contain fires and facilitate rescue in environments inaccessible to humans. We suppose FROBO is deployed in a burning hospital, in a country that is under military blockade. FROBO has been sent to a hallway with two rooms, one on the left and one on the right, both of which contain  fires. FROBO only has the capacity to successfully deal with one of these fires. The left room  contains a significant amount of insulin; a medicine that is extremely hard to come by while the country is under blockade. Without this insulin, many diabetics may die \textit{if} the blockade is not lifted. On the other hand, there is an unconscious patient in the right hand room. If FROBO does not attend to the right room, the patient will die. What should FROBO do?

\begin{figure}
    \centering
    \includegraphics[width=1\linewidth]{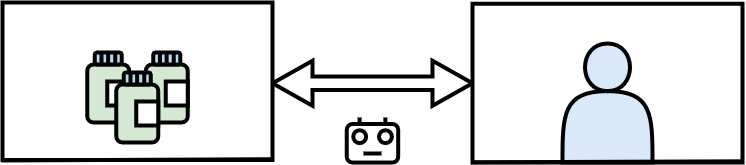}
    \caption{FROBO's dilemma: whether to save a person in immediate danger (right room) or preserve enough insulin to treat dozens of patients (left room)}
    \label{fig:FROBO}
\end{figure}

\paragraph{Contributions}
Section \ref{Sec:background} provides the necessary background for our formal framework as well as a short summary of deontological and utilitarian ethics. In Section \ref{Sec:ContextualFactors}, we review a set of contextual features that, we argue, should be considered when aggregating moral preferences derived from simulation-based studies. Sections \ref{Sec:Formalism} and  \ref{Sec:Technical} presents the paper’s core contributions. We begin (Section \ref{Sec:Formalism}) by formalising a model of moral uncertainty that explicitly incorporates these contextual factors. Then, in Section \ref{Sec:Technical} we demonstrate that a broad class of aggregation methods violates the \emph{weak Pareto principle}, a property often regarded as non-controversial. We interpret this result as a variant of the well-known \emph{Simpson’s paradox}, which we take to highlight the conceptual and practical limitations of context-insensitive aggregation. We argue that this insight motivates a new area of research aimed at understanding how contextual features shape aggregation of moral evaluations. Section \ref{Sec:Discussion} outlines key directions for future work. In particular, we suggest an alternative approach to addressing the limitations of our current three-step pipeline: the integration of `thicker’ ethical information, in particular through the use of structured dialogues and argumentation.

\section{Accounting for Contextual Factors  in Shaping Moral Credences}%\label{Sec:ContextualFactors}

\subsection{Background}\label{Sec:background}

This paper's contributions are presented within a framework proposed by \cite{macaskill2020moral} and formalised in \cite{szabo2024moral}. Let $T$ be a set of ethical theories and $A$ a set of actions. Each ethical theory $t \in T$ is a function $t: A \to \mathbb{R}$, so that for each action $a \in A$, $t(a)$ represents the \emph{choiceworthiness} of action $a$ with respect to theory $t$. The induced ordering $\preceq_t$ of a theory $t$ is the implicit ordering of actions by their choiceworthiness, i.e. for all $a, b \in A$, $a \preceq_t b$ iff $t(a) \le t(b)$. A \emph{credence function} $c$ is a function that maps each ethical theory $t \in T$ to a real number $c(t) \in (0, 1)$ such that the weights of all the theories sum up to $1$, i.e. $\sum_{t \in T}c(t) = 1$. Finally, %given a set of actions $A$,
an \emph{ethical framework} $\langle T, c \rangle$ is a pair consisting of a set of ethical theories $T$ and a credence function $c$.

This formalism facilitates the study of \emph{metanormative theories}: various aggregation functions that yield an overall ranking of actions by integrating the theories' choiceworthiness, each weighted by its associated credence $c(t)$. For example, in \cite{szabo2024moral}, the issue of which metanormative theories avoid fanatical outcomes is studied, where a fanatical outcome arises when a theory $t$ with low credence/advocated by very few in a society, has an outsized effect upon aggregation.
%Could add that these theories and actions are drawn from an infinite set of potential actions and ethical theories but the agent only has a finite subset of theories and actions

In this paper we focus on two ethical theories: \emph{deontology} and \emph{utilitarianism}. Utilitarianism is a consequentialist ethical theory \cite{sep-consequentialism}: the right thing to do is the action whose consequences impartially maximise well-being. On the other hand, deontological theories \cite{sep-ethics-deontological} specify rules encoding obligations and prohibitions on agent behaviours, without necessarily accounting for the consequences of abiding by these rules (hence the deontological maxim `ends do not justify the means'). We also consider the \emph{two-level utilitarianism} variant of utilitarianism \cite{DBLP:journals/ais/Bauer20}; a kind of synthesis of utilitarianism and deontology. Normally, ethical rules guide agent behaviours. However, in novel or unforeseen situations -- where no applicable rules exist or the existing ones fall short --agents revert to direct consequentialist calculations to determine the morally appropriate course of action

\subsection{The Impact of Contextual Factors}\label{Sec:ContextualFactors}

Recall (Section \ref{Sec:introduction}) that we assume that moral credences are a proxy for the relative proportions of individuals in a group that advocate for the use of a theory $t$'s choiceworthiness. We do not consider here the issue of how the exact choiceworthiness values are assigned (although we comment further on this issue in Section \ref{Sec:Discussion}). Suffice it to say that, as discussed in \cite{macaskill2020moral}, different theories posit different moral stakes associated with actions. For example, the deontological prohibition on torture is absolute and thus may assign a negative choiceworthiness that is an order of magnitude greater than the negative choiceworthiness assigned by consequentialism (as in the archetypal dilemma in which torture might be used on a terrorist to locate a ticking bomb).

We also assume that the credence-adjusted aggregation that is used to align the utility function of an AI agent, is obtained on simulated scenarios, and so does not account for the specific contexts in which the AI agent is deployed (although Section \ref{Sec:Discussion} will consider examples where \textit{an individual's} relative credences in distinct ethical theories is used to support AI decision making `on the fly').

We are therefore interested in accounting for how context-specific factors can be used, on the fly, to adjust the moral credences in the relevant ethical theories that are elicited on simulations, and the impact this has on the credence adjusted aggregations. We now briefly review a non-exhaustive list of context-specific factors:

\paragraph{Resource Bounds}
A crucial insight from the study of \emph{machine ethics} is that different ethical theories necessitate different algorithmic implementations \cite{bench2020ethical}. 
In this paper we assume (as in for example \cite{DBLP:journals/ais/Bauer20}) that rule-based ethics  such as deontology  can be implemented efficiently\footnote{For example, if deontological rules are specified in \emph{propositional defeasible logic} then inference has linear time complexity \cite{MAHER_2001}.} (although this is a matter of contention \cite{stenseke2024computational}).  In contrast, we assume that consequentialist ethics, such as utilitarianism, cannot be implemented efficiently, for three reasons \cite{stenseke2024computational}: 1) there is a combinatorial explosion in the number of possible actions that an agent has to evaluate; 2) causal predictions are expensive under epistemic uncertainty; 3) causal predictions are expensive in the presence of other agents. While Reinforcement Learning (RL) methods can alleviate these issues, such stochastic methods presuppose a trial-and-error method, which can be disastrous in morally-salient situations.

With this in mind, the essential idea is that under resource bounds some ethical theories fare well, while others do not. For example, assume for simplicity that FROBO has a single deontological ethical rule: fight fires in rooms where someone is in immediate danger. In our running example, the rule applies only to  the right room, and so deontically prescribes a preference for FROBO attending to the right room.
%In other words, FROBO's implementation of deontology is reliable in this situation: FROBO can accurately simulated what deontology `actually' proscribes.

Consequentialist ethical theories do not fare well under resource bounds. FROBO has limited time; if FROBO waits too long, both rooms will burn down. FROBO also has limited computational resources for reasoning; FROBO is a small robot, not a supercomputer. Suppose then that under the given bounds, and given the available information, FROBO predicts that the military blockade will stay in place and that insulin supplies will remain scarce for the forseeable future. Hence, FROBO calculates that saving the medicine, not the patient, is the action which leads to the most number of people saved. Therefore, FROBO attends to the left room.

%Therefore, FROBO calculates that saving the medicine, not the patient, is the action which leads to the most number of people saved; FROBO predicts that the military blockade will stay in place. Therefore, FROBO attends to the left room.

However, such predicted outcomes can easily be wrong.  FROBO's resource-bounded processing of the available information indicates that ongoing negotiations are unlikely to end soon. However, a more thorough going analysis would have indicated
that the military blockade would likely end relatively soon. FROBO reached the wrong conclusion, but would have reached the right conclusion if its reasoning was subject to less strict bounds.  As a result, FROBO's implementation of utilitarianism is unreliable \textit{in the given resource-bounded context}. On the other hand, whether the blockade will continue or not makes no difference to the deontological obligation to  attend to the right room. Consequently, under resource bounds, not every theory is equally appropriate. The broader lesson demonstrated by this scenario is that  when aggregating the evaluations of ethical theories, it does not suffice to rely on their relative support (moral credences); one needs to account for the extent to which concrete context-specific factors differentially impact the appropriateness of their application. In the case of FROBO, even if a majority of people prefer utilitarianism to deontology, deontology's evaluation should take precedence.

\paragraph{Uncertainty}
Another closely related (to resource bounds) contextual factor is uncertainty about: i) the facts; ii) whether an action will have the desired (intended consequences); iii) the consequences \textit{tout court}. While it is unrealistic to comprehensively account for uncertainty in simulated scenarios, they can significantly influence moral evaluation. Consider a case where deontology prescribes action $a$, whereas utilitarianism advocates action $b$, based on the immediate consequences of the actions. While uncertainty of types i) and ii) may not differentiate between deontology and utilitarianism, the extent to which there is uncertainty of type iii) -- broader or more downstream consequences -- may well impact the moral credence assigned to utilitarianism.

A classic example is the trolley dilemmas \cite{foot1967problem}. A common objection that subjects raise when asked to judge whether to steer the trolley so as to kill the one, rather than let the trolley continue on its way and kill the five, is that nothing is assumed known about the five or the one \cite{GreeneMoralTribes}. What if the five were escaped murderers from a local prison, and the one a renown scientist on the verge of a major medically beneficial discovery? This is an admittedly extreme and unrealistic scenario, albeit one that makes an intuitive point: the more the uncertainty w.r.t. downstream consequences, the less credence one might intuitively assign to consequentialist evaluations\footnote{Note that research in moral psychology supports the view that as uncertainty increases, the appropriateness of relying on utilitarian calculations diminishes \cite{kortenk2014ethics}.}.

\paragraph{Novelty}
\emph{Rule utilitarianism} \cite{sep-consequentialism} is, as the name implies, a rule-based variant of utilitarianism (the version of utilitarianism we have encountered thus far is called \emph{act utilitarianism}). Under this view, agents ought to act according to moral rules derived from utilitarian calculations -- rules that, when generally followed, promote the greatest overall well-being in the long run. Such adherence serves two key purposes: it avoids the need for computationally costly utility calculations on a case-by-case basis \cite{DBLP:journals/ais/Bauer20}, and it facilitates social coordination \cite{DBLP:conf/aies/SerramiaLRMWA18}.

However, the rules may be imperfect: in novel or unexpected situations, strict adherence to the rules may result in undesirable outcomes \cite{BenchMod,bench2019norms}. For example, consider the famous `no vehicle in the park' example \cite{schlag1999no}. There is a rule that states that no vehicles are to drive into the park, as their presence can endanger those on foot. Now, consider that there is an emergency and an ambulance is needed to pick up an injured person. Clearly, the rule prohibiting the ambulance's entry did not anticipate this situation; the ambulance should enter the park. Indeed, as some philosophers argue \cite{hare1963freedom}, in such novel situations, agents ought to reason from first principles. An ambulance driver ought to realise that the rule is mistaken and should be violated. Therefore, some rule-based ethical theories, such as rule utilitarianism, perform poorly in unexpected, novel situations. In such situations, their moral credence ought to be lowered.

%What we loosely term `second order uncertainty' refers to the extent to which   a choice of
%action is morally salient in a given context. The same action may have different moral significance in different contexts. 

Note that these concerns also extend to deontological rules that are not grounded in consequentialist considerations. Consider the following example. A literal minded interpretation of one of Asimov's famous laws of robotics \cite{asimov1940robot} -- \textit{a robot may not injure a human being or, through inaction, allow a human being to come to harm} --  might entail a rescue robot intervening in a situation where a citizen is performing an emergency tracheostomy on a victim in  a disaster zone. Contextual information should clearly reduce credence in this deontological prohibition.

%Another non-credence consideration: value of moral information increases, so agent may need to first act to gather such information (refer to William MacAskill's work)
%\begin{itemize}
%\item First Order Uncertainty. Uncertainty about i) facts ii) whether an action will have the desired (intended consequences) iii) the consequences tout court. i) and ii) relevant for deontology and utilitarianism. iii) relevant for 
%utilitarianism. Remind reader that these are contextual factors that are impractical to account for in 
%the simulated scenarios, even though in classic decision theory one does/can account for them in the moral evaluation (choiceworthiness) of actions.  

% \paragraph{Risk} Risk neutral, Risk Averse, Lara Buchak . See \url{https://ndpr.nd.edu/reviews/risk-and-rationality/} for summary of Buchak's work. 

\paragraph{Supererogation}
\emph{Supererogation} \cite{sep-supererogation} is the phenomenon wherein an action is good and yet not morally required. For example, jumping into a stormy and treacherous sea to save a drowning person is clearly a morally good action and yet goes beyond what is reasonably expected from anyone. We do not typically expect strangers to risk their own lives to save someone else's.

The extent to which an action is considered supererogatory may depend on alternatives courses of action that were not anticipated in simulations. Again, consider the Moral Machine experiment \cite{awad2018moral}, and suppose that on simulations in which an autonomous vehicle (AV) is in a one-way tunnel, subjects judge that the AV should swerve sharply to the right ($r$) to avoid ploughing straight on ($s$) and killing three pedestrians. However, in so doing $r$ is likely to lead to the death of the passenger as the simulation is such that the car will crash head on into the tunnel wall.

However, suppose that in a real-world deployment of the AV, the width of the tunnel is such that the AV can swerve slightly to the right and slide along the inside wall of the tunnel ($w$), with the non-negligible possibility of injury to the passenger and serious injury to the one pedestrian closest to the tunnel wall (a scenario unanticipated in simulations). Now, in \textit{this} context, $r$ remains the preferred utilitarian option, but it has become supererogatory in a way that it was not in the simulated scenario. The presence of the unanticipated third option $w$ -- which may lead to greater overall harm, but with less severe consequences for the passenger -- introduces a morally acceptable alternative that renders $r$ no longer strictly required. Arguably, this expanded context should lower moral credence in the utilitarian prescription of $r$, as it shows that the original obligation was contingent on a limited set of actions.

 %say that ethical theories which reason from first principles are similarly, arguably, better suited for novel situations
%On the other hand, ethical theories, which reason from first principles, such as act utilitarianism, should maintain their moral credence.
\section{Formalising Contextual Shaping of Moral Credences} \label{Sec:Formalism}
In this section, our formalisation of ethical decision-making accounts for the contextual shaping of moral credences. Then in Section \ref{Sec:Technical} we show that a class of intuitively reasonable metanormative theories -- including Maximum Expected Choiceworthiness (MEC) -- violate the weak Pareto principle. However, we argue that this is not necessarily undesirable but rather a variation on Simpson's paradox.

In the FROBO example, the left room contains insulin, which may save numerous lives if the military blockade continues. However, as mentioned in Section \ref{Sec:ContextualFactors}, due to resource bounds, FROBO's utilitarian evaluation about the left room is potentially unreliable. On the other hand, FROBO can accurately evaluate the consequences of attending the right room; it would save a single person. Therefore, we would like to modulate the credences such that for the left room, utilitarianism has a lower credence, while for the right room, its credence remains unchanged. To accommodate such context-dependent and action-specific credences, we define \emph{credence profiles}.

\begin{definition}[\textbf{Credence profile}]\label{Def:CredenceProfile}
Given actions $A = \{a,b,\ldots\}$ and a set of ethical theories $T$,
%and a credence function $c$ for a theory $t \in T$,
a \emph{credence profile} $C = \langle c_a, c_b,  ... \rangle$ is a list of credence functions, one for each action. 
\end{definition}

For example, $c_r$ denotes the credence function for action $r$, attending to the right room.

\begin{remark}\label{ActionSpecificCredences}
Note that in Section \ref{Sec:background} we have specified a \textit{generic} function $c$ that maps a given ethical theory $t$ to its moral credence. In the above definition, a credence profile includes credence functions \textit{individuated with respect to each action}. As formalised later in Definitions \ref{Def:prof} and \ref{DefMini}, this will allow contextual factors (`features') to differentially adjust a theory's moral credence as it pertains to a given action, rather than assuming the credence in a theory's evaluation of actions to be uniformly adjusted by the context.
\end{remark}

We  formalise \emph{contextual features} as functions that affect credences as they pertain to actions:

\begin{definition}[\textbf{Contextual feature}]\label{Def:ContextualFeature}
Given a set of actions $A$ and a set of ethical theories $T$, let a \emph{contextual feature} be a function $g: A \times T \to (0, 1]$.  
\end{definition}

Let $rb$ denote the resource bounds contextual feature. In our FROBO example, utilitarianism $\mathbf{u}$ does not produce a highly reliable evaluation for the left room under the resource bounds that apply in the given context, and so $rb(l, \mathbf{u}) = 0.1$ is small. In general, we want lower-valued contextual features to decrease (or at least not increase) the credence of the theory. Hence, we expect the credence $c_\mathbf{u}(l)$ to be decreased. On the other hand, deontology $\mathbf{d}$'s evaluation of the action $l$ is not impacted by the bounds on resources, and so $rb(l, \mathbf{d}) = 1$. Similarly, we typically expect higher-valued contextual features to increase (or at least not  decrease) the credence of the theory\footnote{One might expect that if $rb(l, \mathbf{d}) = 1$, this would have no impact on the moral credence assigned to $\mathbf{d}$. However, because the total moral credence across ethical theories must sum to $1$, a reduction in credence for one theory due to a contextual factor may require a corresponding increase in credence for another theory that is not similarly affected}. Here, sice $rb(l, \mathbf{d})$ is maximal, we expect the credence $c_\mathbf{d}(l)$ to be increased. 

Consider the right room $r$. As discussed in Section \ref{Sec:ContextualFactors}, utilitarianism $\mathbf{u}$ produces reliable evaluations for the right room, even under the resource bounds, and so $rb(r, \mathbf{u}) = 1$ is maximal. Similarly, deontology $\mathbf{d}$'s evaluation of $r$ is also not impacted by resource bounds, and so $rb(r, \mathbf{d}) = 1$ is also maximal.
%TODO add a footnote saying that credences normalise, that's why they can increase
% For the right room, the confidence of utilitarianism and deontology would be both high, so $g(r, u) = g(r, d) = 1 - \nu$ for some small $\nu$. Therefore, in this situation, we expect the credences to be unaffected by confidence, as they are the same. That is, we expect $c_r(u) = 0.9$ and $c_r(d) = 0.1$ . %TODO update example
(Notice that we do not allow contextual features to have a $0$ evaluation; we do not want that a contextual feature completely invalidates credence in an ethical theory.)
%On the other hand, we allow contextual features to have a $1$ evaluation, as in some cases  contextual features do not impact credence in a theory
%contextual features naturally have a maximal value, which can be achieved. Consider resource bounds $rb$ and deontology $d$; because $d$ is not affected by the resource bounds for action $l$, we have $rb(l, d) = 1$.

In Section \ref{Sec:ContextualFactors}, we reviewed a number of different contextual features. Since we want our formalism to be sufficiently general to accommodate a range of contextual factors, we define a \emph{context} to be an $m$-ary list of contextual features.

\begin{definition} [\textbf{Context}]
Given a set of actions $A$, an $m$-ary \emph{context} $\mathit{con}$ is a list of contextual features $\langle g_1, ..., g_m \rangle$ such that for all $l \in [1, m]$,  $g_l: A \times T \to (0, 1]$ is a contextual feature.
\end{definition}

For example, $\langle rb \rangle$ denotes a context containing only $rb$.

%We denote by $\mathit{con}_g$ the unary context consisting only of the contextual features $g$, i.e. $\mathit{con}_g = \langle g \rangle$.

There are different options, which we encode as \emph{adjustment functions},  as to how to formalise updates to moral credences on the basis of contextual features.  So, an adjustment function takes the initial moral credences (credence function $c$) and updates them, taking into account the contextual features enumerated in the context $\mathit{con}$.
 
\begin{definition}[\textbf{Adjustment function}]
Given a set of actions $A$, $c$  a credence function and $\mathit{con}$  an \emph{adjustment function}, $\mathit{adj}$  maps   $\langle A, c, \mathit{con}\rangle$ to a credence profile $C$.
\end{definition}

We use the notation $\mathit{adj}(A, c, con)$ to denote application of $\mathit{adj}$ to   $\langle A, c, \mathit{con} \rangle$.
%TODO say that we can derive the set of theories implicitly? Probably unnecessary
%TODO emphasise that the different credence functions in the credence profile don't have to be equal? maybe demonstrate this in the example of either prod or mini
In this paper, we introduce two adjustment functions. The first -- $\mathit{prod}$ --  takes the product of all the contextual features and the initial credence function. In other words, each contextual reason can multiplicatively increase the credence or decrease it.

\begin{definition}
[\textbf{Adjustment function   \textit{prod}}]\label{Def:prof}
Let $T$ be set of ethical theories, $A$ a set of actions $\{a,b,\dots\}$, and  $\mathit{con}$ the context $\langle g_1, ..., g_m \rangle$.\\ Then, for every function $c$ that assigns a moral credence to some $t \in T$, $prod$ is function that returns the context adjusted moral credence of $t$'s evaluation of each action in $A$ (i.e., a credence profile  \footnote{Recall Definition \ref{Def:CredenceProfile} and Remark \ref{ActionSpecificCredences}.}).\\ That is to say, for a given theory $t$, $C$ is the credence profile $\langle c_a, c_b... \rangle =  \mathit{prod}(A, c, \mathit{con})$, where for $x \in A$:

    %Let $\mathit{prod}$ be an adjustment function such that, given a set of ethical theories $T$, for every credence function $c$ and context $\mathit{con} = \langle g_1, ..., g_m \rangle$, $\mathit{prod}$ is defined where for all actions $a$ and theories $t \in T$, (letting $C = \mathit{prod}(c, \mathit{con})$ with $C = \langle c_a, c_b... \rangle$)
\[c_x(t) = \alpha \left[\prod_{l \in [1, m]}g_l(x, t)\right]c(t)\]
and where $\alpha \in \mathbb{R}^+$ is a normalising constant defined such that $\sum_{t \in T}c_x(t) = 1$. (Recall that the credence functions have to be unitary, i.e., sum up to $1$).
\end{definition}

Consider FROBO example with initial credences $c(\mathbf{u}) = 0.6$ and $c(\mathbf{d}) = 0.4$; the initial credence of utilitarianism is higher than deontology. Here, the only contextual feature we take into account is resource bounds $rb$, where $rb(l, \mathbf{u}) = 0.1$ and $rb(l, \mathbf{d}) = 1$. Then the updated (context-adjusted) credences are $c_l(\mathbf{u}) = \alpha \times 0.1 \times 0.6 = 0.06 \alpha$ and $c_l(\mathbf{d}) = \alpha \times 1 \times 0.4 = 0.4\alpha$. Since $c_l$ is a credence function, we must have $0.06\alpha + 0.4\alpha = 1$ and so $\alpha = \frac{50}{23}$. Therefore, $c_l(\mathbf{u}) \approx 0.13$ and $c_l(\mathbf{d}) \approx 0.87$. In other words, despite the initial credence favouring utilitarianism, the adjusted credence for the left room strongly prefers deontology.

For the right room, we have $rb(r, \mathbf{u}) = 1$ and $rb(r, \mathbf{d}) = 1$, as explained earlier (following Definition \ref{Def:ContextualFeature}). Then, the updated credences for the right room is $c_r(\mathbf{u}) = \beta \times 1 \times 0.6 = 0.6\beta$ and $c_r(\mathbf{d}) = \beta \times 1 \times 0.4 = 0.4 \beta$. Since $c_r$ is a credence function, we must have $0.6 \beta + 0.4 \beta = 1$ and so $\beta = 1$. Therefore, $c_r(\mathbf{u}) = 0.6$ and $c_r(\mathbf{d}) = 0.4$. In other words, the adjusted credences for the right room match the initial credence $c$, unlike the adjusted credences for the left room (recall that the adjusted credence functions are individuated with respect to the different actions).

%TODO say that Prod implicitly assumes that for each g_l,  g_l(a, t) != 0 is true for some social alternative x (otherwise, weights cannot add up to 1)

The second adjustment function we introduce is $\mathit{mini}$, which takes the minimum of all contextual features and the initial credence. In other words, our updated (non-normalised) credence should not exceed the credence of any contextual feature. Intuitively, this is a `risk-averse' way of  adusting credences as low contextual features/moral credence cannot be traded off by high contextual features/moral credences. By contrast, $\mathit{prod}$ can be said to be `risk-neutral' as the low-values of contextual features can be negated by the high-values of other contextual features.

\begin{definition}[\textbf{Adjustment function \textit{mini}}]\label{DefMini} 
Let $T$ be set of ethical theories, $A$ a set of actions $\{a,b,\dots\}$, and  $\mathit{con}$ the context $\langle g_1, ..., g_m \rangle$.\\ Then, for every function $c$ that assigns a moral credence to some $t \in T$, $mini$ is function that returns the context adjusted moral credence of $t$'s evaluation of each action in $A$.\\ That is to say, for a given theory $t$, $C$ is the credence profile $\langle c_a, c_b... \rangle =  \mathit{mini}(A, c, \mathit{con})$, where for $x \in A$:

%Let $\mathit{mini}$ be an adjustment function such that, given a set of ethical theories $T$, for every credence function $c$ and context $\mathit{con} = \langle g_1, ..., g_m \rangle$, $\mathit{mini}$ is defined where for all actions $a$ and theories $t \in T$, (letting $C = \mathit{mini}(c, \mathit{con})$ with $C = \langle c_a, c_b... \rangle$)
\[c_x(t) = \alpha \min\left(\left[\min_{l \in [1, m]}g_l(x, t)\right], c(t)\right)\]
where $\alpha \in \mathbb{R}^+$ is a normalising constant defined such that $\sum_{t \in T}c_x(t) = 1$. (Recall the unitary requirement).
\end{definition}

% Consider a unitary $G = \langle g \rangle$. In such situations, the above equation simplifies to $w_x'(i) = \alpha \min(g(a, t), c_a(t)$).
%Mini implicitly assumes that for each g_l,  g_l(a, t) != 0 is true for some social alternative x (otherwise, weights cannot add up to 1)

Consider again the FROBO example with initial credences $c(\mathbf{u}) = 0.6$ and $c(\mathbf{d}) = 0.4$. Again, we only  take into account   resource bounds ($rb$) where $rb(l, \mathbf{u}) = 0.1$ and $rb(l, \mathbf{d}) = 1$. Then the updated credences for the left room is $c_l(\mathbf{u}) = \alpha \min(0.1,  0.6) = 0.1 \alpha$ and $c_l(\mathbf{d}) = \alpha \min(1, 0.4) = 0.4\alpha$. Since $c_l$ is a credence function, $0.1\alpha + 0.4\alpha = 1$ and so $\alpha = 2$. Therefore, $c_l(\mathbf{u}) = 0.2$ and $c_l(\mathbf{d}) = 0.8$. In other words, despite the initial credence favouring utilitarianism, the adjusted credence for the left room strongly prefers deontology.

For the right room: $rb(r, \mathbf{u}) = 1$ and $rb(r, \mathbf{d}) = 1$. Then, the updated credences for the right room are $c_r(\mathbf{u}) = \beta \min(1, 0.6) = 0.6\beta$ and $c_r(\mathbf{d}) = \beta \min(1, 0.4) = 0.4 \beta$. Since $c_r$ is a credence function: $0.6 \beta + 0.4 \beta = 1$ and so $\beta = 1$. Therefore, $c_r(\mathbf{u}) = 0.6$ and $c_r(\mathbf{d}) = 0.4$. That is, the adjusted credences for the right room match the initial credence $c$, unlike the adjusted credences for the left room.

In the moral uncertainty literature, \emph{metanormative theories} tell us how to order the actions, given an ethical framework $\langle T, c \rangle$  consisting of a set of ethical theories $T$, a credence function $c$, and the evaluations assigned by the theories to the actions.
 Since we are interested in context adjusted credences individuated w.r.t. the actions, in this paper a metanormative theory $f$ takes a credence profile as an argument, not a credence function.

\begin{definition}[\textbf{Metanormative theory}]
\label{Def:MetanormativeTheory}A \emph{metanormative theory} $f$ is a function that maps every set of ethical theories $T$ and credence profile $C$ to a total preorder $f(T, C)$ over the set of actions.
\end{definition}

In this paper, we use MEC as our primary metanormative theory\footnote{Note that this is because MEC is the standard in the moral uncertainty literature \cite{macaskill2016normative,DBLP:journals/mima/Bogosian17}.}, which we now formally define. Given a set of ethical theories $T$ and a credence profile $C$, the weighted arithmetic mean of an action $x \in A$ is defined as:
\[\mathit{wam}(T, C, x) = \sum_{t \in T}c_x(t)t(x)\]

\begin{definition}[\textbf{Maximising expected choiceworthiness}\label{def:fan:mec}]  ($\textit{mec}$)]
 Let $a, b \in A$ be actions, and $\langle T, C \rangle$   an ethical framework. Then $a \preceq_\textit{mec} b$ iff $\mathit{wam}(T, C, a) \le \mathit{wam}(T, C, b)$, where $\textit{mec}(T, C)~=~\preceq_\textit{mec}$.
\end{definition}

Note that the weighted arithmetic mean $\mathit{wam}$ is often called the \emph{expected choiceworthiness} \cite{macaskill2014normative}, hence the name `maximising expected choiceworthiness'.

\begin{table}[t]
    \centering
    \begin{tabular}{ccccc}
         \hline
         & \multicolumn{2}{c}{\textbf{Left room} $l$} & \multicolumn{2}{c}{\textbf{Right room} $r$} \\
         & $c_l(t)$& $t(l)$& $c_r(t)$& $t(r)$\\ \hline
         Utilitarianism $\mathbf{u}$& $0.2$& $1$& $0.6$& $0$\\
         Deontology $\mathbf{d}$& $0.8$& $0$& $0.4$& $3$\\ \hline
         $\mathit{wam}$& \multicolumn{2}{c}{$0.2$} & \multicolumn{2}{c}{$ \underline{1.2}$}\\
    \end{tabular}
    \caption{FROBO's evaluations and credence profile. Here, the columns $c_l(t)$ and $c_r(t)$ denote the credence profile of the ethical theories. Moreover, the columns $t(l)$ and $t(r)$ denote the evaluations of the actions -- left room and right room, respectively -- by the different ethical theories. Finally, $\mathit{wam}$ gives the expected choiceworthiness of the actions.} %Good enough explanation?
    \label{table:MEC}
\end{table}

Consider  FROBO  with adjustment function $\mathit{mini}$ and a context  consisting only of $rb$ (see Table \ref{table:MEC}). Recall our earlier calculations: $c_l(\mathbf{u}) = 0.2$, $c_l(\mathbf{d}) = 0.8$, and $c_r(\mathbf{u}) = 0.6$, $c_r(\mathbf{d}) = 0.4$. Moreover, assume that the evaluations are $\mathbf{u}(l) = 1$, $\mathbf{u}(r) = 0$, and $\mathbf{d}(l) = 0$, $\mathbf{d}(r) = 3$. In other words, while utilitarian calculations prefer the left room, deontological rules prioritise the right room. MEC orders actions according to their expected choiceworthiness $\mathit{wam}$. For the left room, we have: 
\[\mathit{wam}(\{\mathbf{d}, \mathbf{u}\}, \langle c_l, c_r \rangle, l) = 0.2 \times 1 + 0.9 \times 0 = 0.2\]
Similarly, for the right room, we have: 
\[\mathit{wam}(\{\mathbf{d}, \mathbf{u}\}, \langle c_l, c_r \rangle, r) = 0.6 \times 0 + 0.4 \times 3 = 1.2\]
Thus, due to its higher expected choiceworthiness, MEC leads FROBO to choose the right room.

\section{Violation of the Weak Pareto principle}\label{Sec:Technical}
We now define an important basic property of moral uncertainty (and social choice): the weak Pareto principle, which ensures that unanimous decisions are respected.

\begin{definition} [\textbf{Weak Pareto principle}] \label{def:weakPareto}
A metanormative theory $f$ and an adjustment function $\mathit{adj}$ are said to satisfy the \emph{weak Pareto principle}, if for every ethical framework $\langle T, c \rangle$, every context $\mathit{con}$, every pair of actions $a, b \in A$,
\begin{itemize}
\item if for every theory $t \in T$ $t(a) < t(b)$ holds
    \item $a \prec_{f} b$ must hold, where
    \item ${\preceq_f\ = f(T, C)}$ and $C = \mathit{adj}(C, \mathit{con})$.
\end{itemize}
\end{definition}

%Here, the context matters because we are saying that unanimity should be respected regardless of what the context may be.

We identify a class of metanormative theories and adjustment functions -- specifically \emph{inter-theoretically responsive metanormative theories} and \emph{context-surjective adjustment functions} --
 that jointly \textit{violate} the weak Pareto property. 

\begin{definition} [\textbf{Inter-theoretic responsiveness}] \label{def:intertheoretic}
A metanormative theory $f$ is said to be \emph{inter-theoretically responsive} if for all sets of ethical theories $T$ and all pairs of actions $a, b \in A$, where $u(a) > v(b)$ holds for some theories $u, v \in T$, there exists credence profile $C$ such that $a \succ_f b$, where $\preceq_f\ = f(T, C)$.
\end{definition}
%This idea is closely related to fanaticism; I think every fanatical metanormative theory also satisfies the above property. On the other hand, I don't think HM satisfies this property
%It's also related to responsiveness but not the same

In other words, an inter-theoretically responsive metanormative theory is such that if there is a justification for preferring $a$ over $b$ -- specifically, if there exists two theories $u$ and $v$ such that $u(a) > v(b)$ -- then there must exist a credence profile $C$ such that $a$ is preferred to $b$ in the aggregate ordering, i.e. $a \succ_f b$ holds.
% At a first glance, having $u(a) > v(b)$ for some theories $u, v$ is not sufficient reason to prefer $a$ over $b$. However, as we argue later in Section \ref{sec:simpson}, under some assumptions, this is, in fact, enough reason to prefer $a$ over $b$. An important example of an inter-theoretically responsive metanormative theory is MEC (see Theorem \ref{thm:mec-intertheoretic}).

\begin{definition} [\textbf{Context surjectivity}] \label{def:surjectivity}
An adjustment function $\mathit{adj}$ is \emph{context-surjective} if for any set of actions $A$,  credence function $c$ and credence profile $C$, there exists a context $\mathit{con}$ such that $\mathit{adj}(A, c, \mathit{con}) = C$.
\end{definition}
Alternatively, a function $\mathit{adj}$ is context-surjective if function $\mathit{adj}(A, c, \cdot)$ is surjective for all credence functions $c$.

Intuitively, context surjectivity means that the context can update the initial credence function arbitrarily. Both $\mathit{prod}$ and $\mathit{mini}$ are context-surjective (see Theorems \ref{thm:prod-surjective} and \ref{thm:mini-surjective}).

\subsection{Proofs} \label{sec:proofs}

We first prove that inter-theoretically responsive metanormative theories and context-surjective adjustment functions do not jointly satisfy the weak Pareto property. 

\begin{theorem}\label{thm:notweakPareto}
A metanormative theory $f$ and adjustment  function $\mathit{adj}$ does not jointly satisfy the weak Pareto property if $f$ is inter-theoretically responsive and $\mathit{adj}$ is context-surjective.
\end{theorem}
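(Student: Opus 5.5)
We exhibit a single counterexample. Choose an ethical framework $\langle T, c\rangle$, a context and actions $a,b$ such that every theory strictly prefers $b$ to $a$, and yet the aggregate ordering does not have $a \prec_f b$. Inter-theoretic responsiveness will supply a credence profile that ranks $a$ above $b$. Context surjectivity will then show that this profile is actually produced by some context from the initial credences.

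\textbf{Step 1: the theories.} Take two actions $a,b$ and two theories $u,v$ with
\[
u(a)=2,\quad u(b)=3,\quad v(a)=0,\quad v(b)=1 .
\]
Then $t(a)<t(b)$ for every $t\in T=\{u,v\}$, so the hypothesis of the weak Pareto principle (Definition~\ref{def:weakPareto}) holds for the pair $(a,b)$. At the same time $u(a)=2>1=v(b)$. Hence the hypothesis of inter-theoretic responsiveness (Definition~\ref{def:intertheoretic}) also holds for $a,b$ with the theories $u,v$.

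\textbf{Step 2: a bad profile.} By inter-theoretic responsiveness of $f$, there is a credence profile $C$ with $a\succ_f b$, where $\preceq_f = f(T,C)$. Fix any initial credence function $c$ on $T$, for example $c(u)=c(v)=1/2$.

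\textbf{Step 3: realise the profile.} By context surjectivity of $\mathit{adj}$ (Definition~\ref{def:surjectivity}), there is a context $\mathit{con}$ with $\mathit{adj}(A,c,\mathit{con})=C$. So in the framework $\langle T,c\rangle$ with context $\mathit{con}$, the ordering $f(T,C)$ satisfies $a\succ_f b$.

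\textbf{Step 4: conclude.} Since $f(T,C)$ is a total preorder, $a\succ_f b$ rules out $a\preceq_f b$, and in particular rules out $a\prec_f b$. Yet weak Pareto demands $a\prec_f b$. So the pair $(f,\mathit{adj})$ violates the weak Pareto principle.

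\textbf{Main obstacle.} The only delicate point is bookkeeping. Definition~\ref{def:weakPareto} is written loosely, as $C=\mathit{adj}(C,\mathit{con})$, and should be read as $C=\mathit{adj}(A,c,\mathit{con})$. The profile $C$ given by responsiveness is for the same $T$ and action set $A$ that the framework uses; we take $A=\{a,b\}$ so the two match. Context surjectivity quantifies over all $c$ and all $C$, so any fixed $c$ works. Once the choice of $u,v$ values in Step 1 is made so that both hypotheses hold at once, the rest is direct chaining of the definitions.
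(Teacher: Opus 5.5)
Your proof is correct and follows essentially the same route as the paper: take a framework in which every theory prefers $b$ to $a$ but some $u(a) > v(b)$, obtain a profile with $a \succ_f b$ from inter-theoretic responsiveness, and realise that profile from $c$ via context surjectivity. Your explicit values $u(a)=2$, $u(b)=3$, $v(a)=0$, $v(b)=1$ are a small improvement, since they show that such a framework actually exists, which the paper takes for granted with ``let $T$ be such that''.
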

\begin{proof}
Let $a, b \in A$ be arbitrary actions. Let $(T, c)$ be any ethical framework such that for all theories $t$, $t(a) < t(b)$. Moreover, let $T$ be such that there exist theories $u$ and $v$ with $u(a) > v(b)$. By inter-theoretic responsiveness, there exists a credence profile $C$ such that $a \succ_f b$, where $\preceq_f\ = f(T, C)$.

Because $\mathit{adj}$ is context-surjective, for all $C^*$, there exists context $\mathit{con^*}$ such that $C^* = \mathit{adj}(A, c, \mathit{con^*})$. Specifically, let $\mathit{con}$ be such that $C = \mathit{adj}(A, c, \mathit{con})$. Therefore, we have that, even though for all theories $t$, $t(a) < t(b)$, we also have that $a \succ_f b$, thereby violating the weak Pareto property.
\end{proof}

To show the significance of the above result, we first prove that MEC is inter-theoretically responsive.

\begin{theorem} \label{thm:mec-intertheoretic}
The metanormative theory MEC is inter-theoretically responsive.
\end{theorem}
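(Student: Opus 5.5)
The approach is a direct construction. Given $T$ and actions $a,b$ with $u(a) > v(b)$ for some $u, v \in T$, I will build a credence profile $C$ that concentrates almost all of $c_a$'s mass on $u$ and almost all of $c_b$'s mass on $v$. Then $\mathit{wam}(T,C,a)$ will be close to $u(a)$ and $\mathit{wam}(T,C,b)$ close to $v(b)$. Since $u(a) > v(b)$ strictly, a small enough perturbation gives $a \succ_{\mathit{mec}} b$.

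The construction must respect the constraint that credences lie in $(0,1)$, so we cannot put weight exactly $1$ on a single theory. Let $n = |T|$ and take $\epsilon \in (0,1)$.

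If $n = 1$ then $u = v$, and $c_a(u) = c_b(u) = 1$ is forced. (If the open interval is read strictly, this degenerate case is handled by convention.) In that case $\mathit{wam}(T,C,a) = u(a) > u(b) = \mathit{wam}(T,C,b)$, and we are done.

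For $n \ge 2$, set
\[
c_a(u) = 1-\epsilon, \qquad c_a(t) = \epsilon/(n-1) \text{ for } t \neq u,
\]
and define $c_b$ analogously with $v$ in place of $u$. For any other action $x$, choose $c_x$ arbitrarily, for instance uniform. Each of these is a valid credence function.

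Next comes the estimate. Let $M = \max_{t \in T} |t(a)| + \max_{t\in T}|t(b)|$, which is finite because $T$ is finite (implicit in the summation defining $\mathit{wam}$). Then
\[
\mathit{wam}(T,C,a) = (1-\epsilon)u(a) + \sum_{t\neq u}\tfrac{\epsilon}{n-1}t(a) \;\ge\; u(a) - \epsilon\bigl(|u(a)| + \max_t|t(a)|\bigr),
\]
and similarly $\mathit{wam}(T,C,b) \le v(b) + \epsilon\bigl(|v(b)| + \max_t |t(b)|\bigr)$. The difference $\mathit{wam}(T,C,a) - \mathit{wam}(T,C,b)$ is therefore at least $\bigl(u(a)-v(b)\bigr) - 2\epsilon M$. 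Choosing $0 < \epsilon < \min\bigl(1, (u(a)-v(b))/(2M)\bigr)$ makes this positive; if $M = 0$, any $\epsilon$ works. By Definition~\ref{def:fan:mec}, this yields $a \succ_{\mathit{mec}} b$.

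The only genuine point of care is this $\epsilon$-bookkeeping, which is forced by the open-interval requirement on credences. What makes the argument work at all is that a credence profile, unlike a single credence function, lets $c_a$ and $c_b$ differ. This is the feature the theorem exploits: with a single shared credence function the claim would fail, for instance when every theory prefers $b$.
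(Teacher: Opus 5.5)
Your proof is correct and follows essentially the paper's argument. The paper also puts weight $1-\delta$ on $u$ (resp.\ $v$) and $\delta/(|T|-1)$ on each remaining theory, packages the resulting $2\delta M$ bound as Lemma~\ref{lemma:sum}, and then takes both tolerances below $d/2$ with $d = u(a)-v(b)$; your inline estimate indexes the weights by theories rather than by a set of values, which slightly tidies this by avoiding equal values $t(a)$ collapsing in the set $S$.
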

\begin{proof}
Let $a, b$ be arbitrary actions. Let $T$ be any set of ethical theories such that there exist theories $u$ and $v$ with $u(a) > v(b)$.  We now present a credence profile $C$ such that $a \succ_\mathit{mec} b$ holds, where $\preceq_\mathit{mec}\ = f(T, C)$.

Under MEC, the actions are ordered by their weighted arithmetic mean. That is, for all credence profiles $C$:
\[\mathit{wam}(T, C, a) = \sum_{t \in T}c_a(t)t(a)\]
and
\[\mathit{wam}(T, C, b) = \sum_{t \in T}c_b(t)t(b)\]

By Lemma \ref{lemma:sum} (see later), we can set $c_a$ such that for any $\epsilon > 0$, we have $|u(a) - \mathit{wam}(T, C, a)| < \epsilon$, i.e. $\mathit{wam}(T, C, a) \in (u(a) - \epsilon, u(a) + \epsilon)$. Specifically, such that
\begin{equation} \label{eq:lower}
\mathit{wam}(T, C, a) > u(a) - \epsilon
\end{equation}

Similarly, by Lemma \ref{lemma:sum}, we can set $c_b$ such that for any $\delta > 0$, we have $|v(b) - \mathit{wam}(T, C, b)| < \delta$, i.e. $\mathit{wam}(T, C, b) \in (v(b) - \delta, v(b) + \delta)$. Specifically, such that

\begin{equation} \label{eq:upper}
\mathit{wam}(T, C, b) < v(b) + \delta
\end{equation}

We now show that for appropriate $\epsilon, \delta > 0$, $\mathit{wam}(T, C, a) > \mathit{wam}(T, C, b)$, i.e. $a \succ_\mathit{mec} b$. We know that $u(a) > v(b)$. Let $d = u(a) -v(b) > 0$. Then, let $\epsilon, \delta < \frac{d}{2}$.

To show that $\mathit{wam}(T, C, a) > \mathit{wam}(T, C, b)$, it's sufficient to show that $v(b) + \delta < u(a) - \epsilon$ (by Inequalities \ref{eq:lower} and \ref{eq:upper}). We can rearrange this and obtain that we need to show that $\delta + \epsilon < u(a) - v(b)$. Note that $u(a) - v(b) = d$, by definition.  Using the fact that $\epsilon, \delta < \frac{d}{2}$, we obtain that $\delta + \epsilon < \frac{d}{2} + \frac{d}{2} = d$, as required.
\end{proof}

We now prove that $\mathit{prod}$ satisfies context surjectivity.

\begin{theorem} \label{thm:prod-surjective}
The adjustment function $\mathit{prod}$ is context-surjective.
\end{theorem}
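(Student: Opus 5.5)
We prove context surjectivity directly, by constructing a context for an arbitrary target. Fix a set of actions $A$, a set of theories $T$, an initial credence function $c$ and a target credence profile $C = \langle c_a, c_b, \dots\rangle$. We will use a unary context $\mathit{con} = \langle g \rangle$, where $g$ is a single contextual feature. The natural first guess is $g(x,t) = c_x(t)/c(t)$. Then $\mathit{prod}$ would give, before normalisation, $g(x,t)c(t) = c_x(t)$. Since $\sum_t c_x(t) = 1$, the normalising constant would be $\alpha = 1$, and we would get $\mathit{prod}(A,c,\mathit{con}) = C$ exactly.

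The only obstacle is that a contextual feature must take values in $(0,1]$. The ratio $c_x(t)/c(t)$ is strictly positive, because all credences lie in $(0,1)$. However, it may exceed $1$ whenever the target raises a theory's credence. We fix this using the freedom in the normalising constant. For each action $x$, put
\[ M_x = \max_{t \in T} \frac{c_x(t)}{c(t)} > 0, \]
which is a maximum over a finite set, since $T$ is finite in the framework. Then define
\[ g(x,t) = \frac{c_x(t)}{M_x\, c(t)}. \]
This value lies in $(0,1]$, so $g$ is a legitimate contextual feature.

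It remains to check the output of $\mathit{prod}$. For each action $x$ and theory $t$, the unnormalised credence is $g(x,t)c(t) = c_x(t)/M_x$. These values sum over $t$ to $1/M_x$. So the normalising constant for action $x$ is $\alpha = M_x$, and it is positive as required. The normalised credence is therefore exactly $c_x(t)$.

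The normalising constant in Definition \ref{Def:prof} is per action: in the FROBO example, $\alpha$ for $l$ differs from $\beta$ for $r$. So choosing a different $M_x$ for each $x$ is permitted, and we obtain $\mathit{prod}(A,c,\langle g\rangle) = C$. Since $C$ was arbitrary, this establishes surjectivity of $\mathit{prod}(A,c,\cdot)$ for every $c$, which is context surjectivity.
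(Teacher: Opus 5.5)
Your proposal is correct and is essentially the paper's proof. The paper scales by $M_x = \min\bigl(1, \min_{t \in T} c(t)/c_x(t)\bigr)$, which is exactly the reciprocal of your $\max_{t \in T} c_x(t)/c(t)$; the cap at $1$ is redundant because both credence functions sum to $1$. The paper also pads the context with features identically equal to $1$ where you use a unary context, and in both versions the per-action normalising constant absorbs the rescaling and returns $c_x$ exactly.
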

\begin{proof}
An adjustment function $\mathit{adj}$ is \emph{context-surjective} if for every set of actions $A = \{a,b,\ldots\}$, credence function $c$ and credence profile $C = \langle c_a, c_b ...\rangle$, there exists a context $\mathit{con} = \langle g_1, ..., g_m \rangle$ such that $ C' = C$, where $C' =  \mathit{adj}(A,c, \mathit{con}) = \langle c_a', c_b' ...\rangle$.

Specifically, we show that for arbitrary $x \in A$, we can set the context $\mathit{con}$ so that $c'_x = c_x$ holds.
For $\mathit{prod}$, given set of ethical theories $T$, for arbitrary $x \in A$, let
\begin{equation} \label{eq:prod:M}
M_x' = \min_{t \in T} \frac{c(t)}{c_x(t)}
\end{equation}
Note that for all $t \in T$, $c_x(t) > 0$ and $c_x(t) > 0$ (as they are both credence functions) and so $M_x' > 0$. Moreover, let $M_x = \min(1, M_x')$.

We can now define the context $\mathit{con}$. For any $x \in A$, $t \in T$, let:
\begin{equation} \label{eq:prod:g1}
g_1(x, t) = M_x \times \frac{c_x(t)}{c(t)}
\end{equation}
Note that we must have $g_1(x, t) \in (0, 1]$.  By definition of $M_x$ and $M_x'$, we know that $M_x \le M'_x \le \frac{c(t)}{c_x(t)}$. And so, we must have that $\frac{1}{M_x} \ge \frac{c_x(t)}{c(t)}$ and so $1 \ge  M_x \times \frac{c_x(t)}{c(t)}$, i.e.  $g_1(x, t) \le 1$. Moreover, we know that $M_x > 0$ and $\frac{c_x(t)}{c(t)} > 0$ and so $g_1(x, t) \ge 0$ is also true.

Moreover, for each $l \in [2, m]$, $x \in A$, $t \in T$, let
\begin{equation} \label{eq:prod:gl}
g_l(x, t) = 1
\end{equation}
By definition of $\mathit{prod}$, for each $t \in T$ and $x \in A$, it must be the case that:
\begin{equation}\label{eq:c-prime-x}
     c'_x(t) = \alpha \times g_1(x, t) \times \left[\prod_{l \in [2, m]}g_l(x, t)\right] c(t)
\end{equation}
 By Equations \ref{eq:prod:g1} and \ref{eq:prod:gl}, we substitute for $g_1$ and $g_l$ in Eq.\ref{eq:c-prime-x}:
\begin{equation}\label{eq:c-prime-x-sub}
    c'_x(t) = \alpha \times M_x \times \frac{c_x(t)}{c(t)} \times \left[\prod_{l \in [2, m]}1\right] c(t)\end{equation}
\[= \alpha \times M_x \times \frac{c_x(t)}{c(t)} \times  c(t)\]
\begin{equation} \label{eq:prod:c'}
= \alpha \times M_x \times c_x(t)
\end{equation}

We can calculate $\alpha$ from the fact that $c'_x$ must be unitary:
\[\sum_{t \in T}c'_x(t) = 1\]
Substituting in Equation \ref{eq:prod:c'} obtains
\[\sum_{t \in T}\alpha \times M_x \times c_x(t) = 1\]
rewritten as:
\[\alpha \times M_x \sum_{t \in T} c_x(t) = 1\]
Since $c_x$ is unitary ($\sum_{t \in T}c_x(t) = 1$):
\[\alpha \times M_x = 1\]
 which,   substituting in Equation \ref{eq:prod:c'}  obtains that for all $t \in T$, $x \in A$:
\[c'_x(t) 
=c_x(t)\]
Therefore, $\mathit{prod}$ is context-surjective.

\end{proof}

Similarly, we show that $\mathit{mini}$ is also context-surjective.

\begin{theorem} \label{thm:mini-surjective}
The adjustment function $\mathit{mini}$ is context-surjective.
\end{theorem}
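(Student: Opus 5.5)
We follow the template of the proof of Theorem \ref{thm:prod-surjective}. Fix a set of actions $A$, a credence function $c$ and a target credence profile $C = \langle c_a, c_b, \ldots\rangle$. We build a context $\mathit{con} = \langle g_1, \ldots, g_m\rangle$ with $g_l(x,t) = 1$ for all $l \in [2,m]$, so that only $g_1$ matters. The goal is then to show $c'_x = c_x$ for every $x \in A$, where $C' = \mathit{mini}(A, c, \mathit{con}) = \langle c'_a, c'_b, \ldots\rangle$.

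The construction must ensure that the minimum in Definition \ref{DefMini} is always attained by $g_1$, never by $c(t)$. Then $\min(g_1(x,t), c(t)) = g_1(x,t)$, and the normalisation returns $g_1(x,\cdot)$ rescaled to sum to $1$. To arrange this, for each $x \in A$ define
\[ M_x = \min_{t \in T} c(t), \qquad g_1(x,t) = M_x \, c_x(t). \]
Since $c_x(t) \in (0,1)$ and $M_x \in (0,1)$, we get $g_1(x,t) \in (0,1]$, so $g_1$ is a legitimate contextual feature. Moreover $g_1(x,t) = M_x c_x(t) < M_x \le c(t)$. The remaining features equal $1 \ge g_1(x,t)$, so
\[ \min\left(\left[\min_{l} g_l(x,t)\right], c(t)\right) = M_x c_x(t). \]
Because $M_x$ does not depend on $x$, one can simply write $M$.

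Substituting into Definition \ref{DefMini} gives $c'_x(t) = \alpha M c_x(t)$. Summing over $t$ and using $\sum_t c_x(t) = 1$ together with $\sum_t c'_x(t) = 1$ yields $\alpha M = 1$. Hence $c'_x(t) = c_x(t)$ for all $t$ and $x$, which establishes context surjectivity.

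The only delicate step is guaranteeing that $g_1$ strictly dominates the minimum against $c(t)$ while staying in $(0,1]$. This is why the target credences are scaled by the smallest initial credence. That quantity is positive because credences lie in $(0,1)$ and $T$ is finite. Everything else is the same normalisation computation as for $\mathit{prod}$.
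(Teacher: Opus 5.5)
Your proposal is correct and follows the paper's proof essentially step for step. You make the same choice $M=\min_{t\in T}c(t)$ and $g_1(x,t)=M\,c_x(t)$ with all other features equal to $1$, use the same observation that $M\,c_x(t)<M\le c(t)$ forces the minimum to be $M\,c_x(t)$, and apply the same normalisation step giving $\alpha M=1$ and hence $c'_x=c_x$.
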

\begin{proof}
An adjustment function $\mathit{adj}$ is \emph{context-surjective} if for every set of actions $A = \{a,b,\ldots\}$, credence function $c$ and credence profile $C = \langle c_a, c_b ...\rangle$, there exists a context $\mathit{con} = \langle g_1, ..., g_m \rangle$ such that $ C' = C$, where $C' =  \mathit{adj}(A,c, \mathit{con}) = \langle c_a', c_b' ...\rangle$.

Specifically, we show that for arbitrary $x \in A$, we can set the context $\mathit{con}$ so that $c'_x = c_x$ holds.
Let:
\begin{equation} \label{eq:mini:M}
M = \min_{t \in T} c(t)
\end{equation}
Note that for all $t$, $c(t) \in (0, 1)$, and so $M \in (0, 1)$.

We can now define the context $con$. For any $x \in A$, $t \in T$, let:
\begin{equation} \label{eq:mini:g1}
g_1(x, t) = M \times c_x(t)
\end{equation}
Note that $g_1$ is a contextual feature: we have $g_1(x, t) \in (0, 1]$ as $M \in (0, 1)$ and $c_x(t) \in (0, 1)$ and so their product is also between $0$ and $1$. Moreover, since $c_x(t) < 1$, then $g_1(x, t) \le M$.
Moreover, for any $l \in [2, m]$, any $x \in A$, $t \in T$, let
\begin{equation} \label{eq:mini:gl}
g_l(x, t) = 1
\end{equation}
By definition of $\mathit{mini}$, for any $t \in T$:
\begin{equation}\label{eq:c-prime-x-sub2}c'_x(t) = \alpha \min\left( g_1(x, t), \left[\min_{l \in [2, m]}g_l(x, t)\right], c(t)\right)\end{equation}
which, when substituting for $g_1(x,t)$ (Eq.\ref{eq:mini:g1}) and $g_1(x,t)$ (Eq.\ref{eq:mini:gl}) obtains:
\[c'_x(t) = \alpha \min\left(  M \times c_x(t), \left[\min_{l \in [2, m]}1\right], c(t)\right)\]
\[= \alpha \min\left(  M \times c_x(t), 1, c(t)\right)\]
and since $M \times c_x(t) < 1$ (recall Eq.\ref{eq:mini:g1} and
$g_1(x, t) \in (0, 1]$) and $c(t) < 1$:
\begin{equation}\label{Eq:Bla}c'_x(t) = \alpha \min\left(  M \times c_x(t), c(t)\right)\end{equation} 

From  Eq.\ref{eq:mini:M}, we have that $M \le c(t)$. Moreover, $c_x(t) < 1$ for all $t$. Hence $M \times c_x(t) < c(t)$, and so:
\begin{equation} \label{eq:mini:c'}
c'_x(t) = \alpha \times M \times c_x(t)
\end{equation}

We can calculate $\alpha$ from the fact that $c'_x$ must be unitary:
\[\sum_{t \in T}c'_x(t) = 1\]
Substituting in Equation \ref{eq:mini:c'}, we obtain:
\[\sum_{t \in T}\alpha \times M \times c_x(t) = 1\]
rewritten as:
\[ \alpha \times M \sum_{t \in T} c_x(t) = 1\]
Since $c_x$ is unitary ($\sum_{t \in T}c_x(t) = 1$), we obtain:
\[\alpha \times M = 1\]
And so 
 substituting  into Equation \ref{eq:mini:c'} we obtain that for all $t \in T$, $x \in A$:
\[c'_x(t) =  c_x(t)\] 
Therefore, $\mathit{mini}$ is context-surjective.
%$\alpha = \frac{1}{M}$ must be true (remark that $M \ne 0$). Therefore, we can substitute this into Equation \ref{eq:mini:c'} and obtain that for all $t \in T$, $x \in A$:
%\[c'_x(t) = \alpha \times M \times c_x(t)\]
%\[= \frac{1}{M} \times M \times c_x(t)\]
%\[= c_x(t)\]
%Therefore, $\mathit{mini}$ is context-surjective.

\end{proof}

From Theorems \ref{thm:notweakPareto}, \ref{thm:mec-intertheoretic}, \ref{thm:prod-surjective} and \ref{thm:mini-surjective} the following result immediately follows:

\begin{corollary} \label{thm:mec-prod-mini-notweakPareto}
The metanormative theory $\mathit{mec}$ with 1) the adjustment function $\mathit{prod}$ and 2) the adjustment function $\mathit{mini}$ jointly violate the weak Pareto principle.
\end{corollary}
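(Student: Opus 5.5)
The corollary is a direct instantiation of Theorem \ref{thm:notweakPareto}. I would prove the two cases in parallel by checking that theorem's hypotheses and then invoking it. By Theorem \ref{thm:mec-intertheoretic}, $\mathit{mec}$ is inter-theoretically responsive. By Theorems \ref{thm:prod-surjective} and \ref{thm:mini-surjective}, $\mathit{prod}$ and $\mathit{mini}$ are context-surjective. Applying Theorem \ref{thm:notweakPareto} to the pair $(\mathit{mec},\mathit{prod})$ and then to $(\mathit{mec},\mathit{mini})$ gives that neither pair satisfies the weak Pareto principle.

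The one point needing care is that the weak Pareto principle is universally quantified. A violation therefore needs only a single witness: an ethical framework $\langle T,c\rangle$, actions $a,b$ and a context. The proof of Theorem \ref{thm:notweakPareto} assumes a framework in which every $t\in T$ has $t(a)<t(b)$, while some $u,v\in T$ have $u(a)>v(b)$. I would exhibit such a framework explicitly, so the corollary does not rest on an unstated existence assumption.

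For the witness I take $T=\{u,v\}$ with $u(a)=2$, $u(b)=3$, $v(a)=0$, $v(b)=1$, and $c(u)=c(v)=1/2$. Then both theories strictly prefer $b$, yet $u(a)=2>1=v(b)$. For this example one can even write the credence profile directly rather than relying on the limiting argument of Lemma \ref{lemma:sum}. Take $c_a(u)=0.9$, $c_a(v)=0.1$, giving $\mathit{wam}(a)=1.8$. Take $c_b(u)=0.1$, $c_b(v)=0.9$, giving $\mathit{wam}(b)=1.2$. Hence $a\succ_{\mathit{mec}} b$. Context-surjectivity (via the explicit constructions in Theorems \ref{thm:prod-surjective} and \ref{thm:mini-surjective}) then yields a context $\mathit{con}$ whose adjusted profile is exactly this $C$, for each of $\mathit{prod}$ and $\mathit{mini}$.

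I expect no real obstacle. The only step needing attention is to confirm that the chosen credences lie strictly in $(0,1)$, which they do. This matters because the surjectivity constructions divide by $c_x(t)$ and require the contextual features to take values in $(0,1]$.
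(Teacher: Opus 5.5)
Your proposal is correct and follows the paper's own route, which obtains the corollary by combining Theorems \ref{thm:notweakPareto}, \ref{thm:mec-intertheoretic}, \ref{thm:prod-surjective} and \ref{thm:mini-surjective}. Your explicit witness (with $T=\{u,v\}$ and the $0.9/0.1$ credence profiles, which the surjectivity constructions do realise for both $\mathit{prod}$ and $\mathit{mini}$) is a worthwhile addition. The paper's proof of Theorem \ref{thm:notweakPareto} tacitly assumes such a framework exists and relies on the limiting argument of Lemma \ref{lemma:sum}, whereas your witness makes both points concrete.
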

We now prove the earlier referenced Lemma \ref{lemma:sum}:
%[I am planning on including the below lemma in the appendix because I think it has been probably proved in many math text books (but I couldn't find it myself, so I just proved it myself).]
\begin{lemma} \label{lemma:sum}
Let $S $ be a set of real numbers and let ${s_{i'} \in S}$ be an arbitrary element of $S$. Then, for any $\epsilon > 0$, there exists a weight function $w: S \to (0, 1)$ such that $\left| s_{i'} - \sum_{s_{i} \in S} w(s_{i})s_{i} \right| < \epsilon$.
\end{lemma}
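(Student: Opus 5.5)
I will prove the lemma by an explicit construction that puts almost all of the weight on $s_{i'}$ and spreads a tiny remainder over the other elements. I will take $S$ to be finite, say $S=\{s_1,\dots,s_n\}$, since it is a set of evaluations of a fixed action by the finitely many theories in $T$. I will also make the weights sum to $1$. The lemma does not state this, but Theorem \ref{thm:mec-intertheoretic} needs it, because there the weights become the credence function $c_a$ (resp. $c_b$). For the same reason I index by theories rather than by values, which makes repeated values harmless.

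\textbf{Construction.} Assume first that $n \ge 2$, and fix a parameter $\eta \in (0,1)$ to be chosen later. Set
\[
w(s_{i'}) = 1-\eta, \qquad w(s_i) = \frac{\eta}{n-1} \quad \text{for } i \neq i'.
\]
Every weight lies in $(0,1)$, and the weights sum to $(1-\eta) + (n-1)\frac{\eta}{n-1} = 1$.

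\textbf{Estimate.} Let $K = \max_{i}|s_i|$. A direct computation gives
\[
s_{i'} - \sum_{i} w(s_i)s_i = \eta\, s_{i'} - \frac{\eta}{n-1}\sum_{i\neq i'} s_i .
\]
The triangle inequality bounds the absolute value of this by $\eta K + \eta K = 2\eta K$. It therefore suffices to choose
\[
0 < \eta < \min\left(1, \frac{\epsilon}{2K+1}\right),
\]
where the $+1$ in the denominator handles the case $K=0$. This gives $2\eta K < \epsilon$, as required.

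\textbf{Degenerate case.} The only real obstacle is $n=1$. Weights in the open interval $(0,1)$ cannot then sum to $1$.
\begin{itemize}
\item For the lemma as literally stated, which has no normalisation, the single weight $w(s_{i'}) = 1-\eta$ with $\eta|s_{i'}| < \epsilon$ still works.
\item For the intended application, a single theory would have credence $1$, which violates the definition of a credence function.
\end{itemize}
I would therefore remark that Theorem \ref{thm:mec-intertheoretic} implicitly assumes $|T|\ge 2$. This assumption is harmless, because inter-theoretic responsiveness only has content when there are two theories $u \neq v$ with $u(a) > v(b)$ (or $u=v$ when $|T|=1$, in which case the single-theory credence issue must be sidestepped as above).
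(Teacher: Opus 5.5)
Your proof is correct and is essentially the paper's own argument. The paper likewise puts weight $1-\delta$ on $s_{i'}$ and $\delta/(|S|-1)$ on every other element, bounds the error by $2\delta M$ via the triangle inequality with $M=\max(1,\max_i|s_i|)$ (your $2K+1$ plays the same role), and takes $\delta<\epsilon/(2M)$. Your extra care fills small gaps that the paper leaves implicit without changing the route: capping $\eta<1$, treating $|S|=1$ where the paper would divide by zero, making normalisation explicit, and indexing by theories so that repeated values cause no trouble.
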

\begin{proof}
We show our proof by constructing an appropriate $w$. In particular, let $w(s_{i'}) = 1 - \delta$ and for all $i \ne i'$, let ${w(s_i) = \frac{\delta}{|S| -1}}$ where $\delta > 0$ is a positive constant (we give its precise value later). Therefore, we must have $\sum_{s_i \in S \wedge i \neq i'} w(i) = \delta$ since we add up $\delta$ $|S| - 1$ times. Overall, we have:
\[\sum_{s_i \in S} w(s_i) s_i = (1-\delta)s_{i'} + \sum_{s_i \in S \wedge i \neq i'} w(s_i) s_i\]

And so, we must have:
\[\left| s_{i'} -\sum_{s_i \in S} w(s_i) s_i \right| = \left| s_{i'} -(1-\delta)s_{i'} - \sum_{s_i \in S \wedge i \neq i'} w(s_i) s_i \right|\]
\[= \left|\delta s_{i'} - \sum_{s_i \in S \wedge i \neq i'} w(s_i) s_i\right|\]

Since $\delta > 0$ and $w(s_i) > 0$ for all $s_i \in S$, we can derive the following inequality:
\[\left|\delta s_{i'} - \sum_{s_i \in S \wedge i \neq i'} w(s_i) s_i\right| \le |\delta s_{i'}| + \left| \sum_{s_i \in S \wedge i \neq i'} w(s_i) s_i\right| \]
\[\le \delta |s_{i'}| + \sum_{s_i \in S \wedge i \neq i'} w(s_i) |s_i|\]

Now, let $M' = \max_{s_i \in S} |s_i|$ and let $M = \max(1, M')$. Therefore, we must have  $\delta |s_{i'}| \le \delta M$. Moreover, for any other element $s_i$, it must be  that $w(s_i) |s_i| \le w(s_i) M$.
Hence:
\[\delta |s_{i'}| + \sum_{s_i \in S \wedge i \neq i'} w(s_i) |s_i| \le \delta M + \sum_{s_i \in S \wedge i \neq i'} w(s_i) M\]
\[= \delta M + M \sum_{s_i \in S \wedge i \neq i'} w(s_i)\]
Note that we have defined $w$ such that $\sum_{s_i \in S \wedge i \neq i'} w(i) = \delta$, and so we must have
\[= \delta M + M \delta = 2\delta M\]

Note that $M > 0$ is some positive constant. Therefore, we can choose $\delta$ to be arbitrarily small, namely, we can make $2\delta M < \epsilon$ for any chosen $\epsilon > 0$ by setting $\delta < \frac{\epsilon}{2M}$.
\end{proof}

\subsection{Simpson's Paradox} \label{sec:simpson}
We now invoke \emph{Simpson's paradox} by way of commenting on what at first glance is the seemingly unintuitive violation of the weak Pareto property by context-adjusted metanormative theories.
A well-known illustration of Simpson's paradox is the gender disparity paradox observed in admissions to the University of California, Berkeley \cite{wagner1982simpson}.  The fact that men were more likely to be admitted to Berkeley than women, prompted the university to review each department's admission rates. They discovered a seeming  paradox: for each department, the admission rate for women was higher than for men. The explanation was that women were more likely than men to apply to departments whose admission rates were lower, with  men more likely to apply to departments with higher admissions rates. As a result, while women had higher admission rates for individual departments,  overall  admission rates  were lower.

Note that the men had higher acceptance rates because the acceptance rates of women in the `harder-to-get-into' departments was lower than the acceptance rates of men in the `easier-to-get-into' departments. In other words, for the average acceptance rate of men to be higher than that of women, there had to be two departments, such that the average acceptance rate of men in an `easier' department was higher than the average acceptance rate of women in a `harder' department. This idea is captured by inter-theoretic responsiveness: if under one theory $u$ an action $a$ is preferred to another theory $v$ for action $b$ (i.e. $u(a) > v(b)$) then we can set the credence such that $a$ wins out over $b$. In other words, inter-theoretic responsiveness is the systematic possibility of Simpson's paradox happening.

\begin{table}[t]
    \centering
    \begin{tabular}{ccccc}
         \hline
         & \multicolumn{2}{c}{\textbf{Left room} $l$} & \multicolumn{2}{c}{\textbf{Right room} $r$} \\
         & $c_l(t)$& $t(l)$& $c_r(t)$& $t(r)$\\ \hline
         Act utilitarianism $\mathbf{u}$& $0.6$& $1$& $0.13$& $0$\\
         Two-level utilitarianism $\mathbf{v}$& $0.4$& $4$& $0.87$& $3$\\ \hline
         $\mathit{wam}$& \multicolumn{2}{c}{$2.2$} & \multicolumn{2}{c}{$\underline{2.61}$}\\
    \end{tabular}
    \caption{FROBO example demonstrating the violation of the weak Pareto property using MEC.} %Good enough explanation?
    \label{table:Simpson}
\end{table}

Let us now consider a concrete example demonstrating Simpson's paradox for MEC and $\mathit{prod}$. Let $\mathbf{u}$ be an (act) utilitarian ethical theory with $\mathbf{u}(l) = 1$ and $\mathbf{u}(r) = 0$. Let the other theory be $\mathbf{v}$, a two-level utilitarian theory. In particular, the rule-utilitarian component of $\mathbf{v}$ has a rule that FROBO should save lives wherever possible. As a result, attending to the right room is seen as quite a good action, as it satisfies an ethical norm, i.e. $\mathbf{v}(r) = 3$. However, FROBO has no established ethical rules to evaluate  the left room; FROBO resorts to evaluating the left room from first (utilitarian) principles. As stated earlier, FROBO reasons that the blockade will not be lifted and so the left room is a preferable action to the right room, i.e. $\mathbf{v}(l) = 4$.

Assume that the only contextual feature FROBO takes into account is the resource bounds $rb$. Here, consider   a modified version of the FROBO example wherein the computational power of FROBO is severely limited. We therefore  assume that FROBO's (act) utilitarian calculations do not fare well under these stricter resource bounds in either of the rooms, i.e. $rb(l, \mathbf{u}) = rb(r, \mathbf{u}) = 0.1$. Since $\mathbf{v}$'s evaluations of the left room are based on similar calculations, it also does not perform well under FROBO's resource constraints, i.e. $rb(l, \mathbf{v}) = 0.1$. However, the ethical rule FROBO used to evaluate the right room is not affected by these constraints, i.e. $rb(r, \mathbf{v}) = 1$. In other words, $\mathbf{v}$ is an ethical theory such that only one of its prescriptions are negatively impacted by the resource bounds.

Assume the initial credences are $c(\mathbf{u}) = 0.6$ and $c(\mathbf{v}) = 0.4$. Using $\mathit{prod}$, we can calculate the updated credences. First, $c_l$ is such that $c_l(\mathbf{u}) = \alpha \times 0.1 \times 0.6 = 0.06 \alpha$ and $c_l(\mathbf{v}) = \alpha \times 0.1 \times 0.4 = 0.04 \alpha$, Therefore, $\alpha = 10$ and so $c_l(\mathbf{u}) = 0.6$ and $c_l(\mathbf{v}) = 0.4$. In other words, for the left room the initial credences remain unchanged, i.e. $c_l = c$.

Second, $c_r$ is such that $c_r(\mathbf{u}) = \beta \times 0.1 \times 0.6 = 0.06 \beta$ and $c_r(\mathbf{v}) = \beta \times 1 \times 0.4 = 0.4 \beta$. Therefore, $\beta = \frac{50}{23}$ and $c_r(\mathbf{u}) \approx 0.13$ and $c_r(\mathbf{v}) \approx  0.87$. In other words, for the right room, two-level utilitarian $\mathbf{v}$ is weighed significantly more, despite its lower initial credence.

We calculate the weighted arithmetic mean of both actions. For the left room:
$\mathit{wam}(\{ \mathbf{u}, \mathbf{v}\}, \langle c_l, c_r \rangle, l) = c_l(\mathbf{u}) \times \mathbf{u}(l) + c_l(\mathbf{v}) \times \mathbf{v}(l) = 0.6 \times 1 + 0.4 \times 4 = 2.2$. 
For the right room:
$\mathit{wam}(\{\mathbf{u}, \mathbf{v}\}, \langle c_l, c_r \rangle, r) = c_r(\mathbf{u}) \times \mathbf{u}(r) + c_r(\mathbf{v}) \times \mathbf{v}(r) \approx 0.13 \times 0 + 0.87 \times 3 \approx 2.61$.
Due to its higher weighted arithmetic mean, MEC prefers the right room.

That is, despite all ethical theories preferring the left room to the right room, overall the right room is preferred. Note that this happened because for the left room the evaluation of $\mathbf{u}$ was dominant, while for the right room the evaluation of $\mathbf{v}$ was dominant, where $\mathbf{u}(l) < \mathbf{v}(r)$.
\section{Conclusions and Future Work}\label{Sec:Discussion}
The central thesis of our paper is that contextual factors should be taken into account when aggregating ethical evaluations of actions elicited from simulations. We believe that our formalisation of context-adjusted moral credences in the ethical theories licensing evaluations, and the study thereof,
opens up a new area of study within moral uncertainty research and value-alignment more generally, with many crucial follow up questions.

\paragraph{Contextually-Weighted Social Choice}
We have shown that MEC - the standard mode of aggregation in moral uncertainty - may violate the weak Pareto property. Moreover, we have argued that despite the apparent paradox,
this is not problematic. This departure from standard assumptions implies that contextually-weighted preference aggregation diverges significantly from traditional social choice theory, where the weak Pareto property is typically regarded as foundational \cite{sep-social-choice}.  Our findings therefore motivate a reassessment of the criteria used to evaluate aggregation methods in context-sensitive settings. Addressing these questions could lead to a new area of research at the intersection of moral uncertainty and social choice theory.

\paragraph{Contextual Factors}
We have identified several contextual factors that influence how pluralistic ethical evaluations are aggregated. We do not claim this list to be exhaustive. An important direction for future work is to more comprehensively investigate additional factors that may shape aggregation. For example, the extent to which unanticipated context-specific alternative choices of action may impact the degree to which one is risk averse \cite{Buchak}, and how this may differentially impact credences in moral theories.

\paragraph{Moral Uncertainty}
This paper has not addressed several key challenges in the literature on moral uncertainty. One such challenge is the \emph{problem of fanaticism} \cite{macaskill2020moral,szabo2024moral}, in which low-credence moral theories can disproportionately influence an agent's decision-making. Notably, MEC is known to be susceptible to this problem. Further research is needed to understand how contextual factors might mitigate or exacerbate fanaticism and related phenomena.

\paragraph{Adjustment Functions}
We introduced two simple adjustment functions: $\mathit{prod}$ and $\mathit{mini}$. However, we do not claim that these functions are \emph{descriptive} -- that is, reflective of how humans actually incorporate contextual information -- nor do we claim that they are \emph{prescriptive} -- that agents ought to adjust their credence in these ways. Future work should explore adjustment functions along both these dimensions. Empirical studies could help illuminate the descriptive question, while philosophical analysis can contribute to understanding the normative dimension.
%Considering things like being responsive to stakes and credences fanaticism

\paragraph{`On the fly' Alignment Through Dialogue}
In Section \ref{Sec:ContextualFactors} we suggested application of our work to scenarios in which an \textit{individual}’s relative credences in distinct ethical theories is used to support AI decision making ‘on
the fly'. Indeed, while the value alignment problem initially came to prominence
in anticipation of more generally intelligent, in particular `superintelligent', systems \cite{bostrom2014superintelligence}, it also applies more prosaically to narrow AI systems. 

Consider a personal assistant large language model PAL who assembles holiday itineraries for Sally. Based on her basic requirements, PAL recommends  itineraries in Malaga  and Lanzarote.  Sally then asks PAL to estimate the overall carbon footprints of these   itineraries. On this basis Sally adopts a consequentialist argument for preferring Malaga. However PAL reminds Sally that in their previous interactions, she deontologically preferred destinations with better value-for-money hotels, and that this constitutes a deontological argument for preferring Lanzarote. Sally responds by arguing that in this decision making context  sustainability is more important than cost, because she will be spending very little time in a hotel, given that the weather in both destinations will be gorgeous. In other words, in this context her moral credence for the consequentialist based preference is greater than the deontology based preference. As a result of this interaction, PAL's understanding of Sally's values is augmented and learnt for re-use in future interactions. 

This scenario adheres to the spirit of value alignment solutions advocated by \cite{CIRL,russell2019human}, viz.a.vie. that AI systems should  perform tasks while simultaneously learning users' value-based preferences as they evolve over time. However, these works assume that humans know their value based preferences from the outset (and are learnt by passive observation and instruction from users), rather than being shaped by reasons and argument. On the other hand, in the Sally-PAL dialogical interaction, PAL’s superior information processing is leveraged in support of the decision making task, while also exploiting PAL’s superior consequential reasoning in helping Sally establish her value-based preferences. Indeed, she is further supported by PAL’s reference to her prior decisions, which help establish -- within this particular decision-making context -- differential moral credences assigned to the arguments associated with distinct consequentialist and deontological theories. We therefore aim to integrate our work on moral credences and the impact of contextual factors, with ongoing proposals for argumentation-based dialogues designed to support value-alignment \cite{bezouvrakatseli2024dialoguesjointhumanaireasoning}. Moreover, the uses of argument to support users in assigning theory specific valuations of actions and their differential credences in these theories, would be especially useful when subjects are asked to judge simulations in the multi-step pipeline approaches our current paper assumes \cite{DBLP:journals/ibmrd/NoothigattuBMCM19}. After all, the simulated scenarios typically present the kinds of ethical dilemmas that subjects are unlikely to have encountered in their everyday lives. As a result: 1) they are unlikely to feel confident in their ethical evaluations; 2) \textit{individual} subjects may wish to assess \textit{individual} actions under different ethical theories, attributing varying degrees of confidence (i.e. moral credences) to the relevance of each theory, and 3) they may seek arguments that offer \textit{prescriptive} guidance, especially since eliciting their \textit{descriptive} preferences -- typically used for value alignment -- is less feasible given the ethical novelty of the simulations.

\paragraph{Thick Ethics}
Moral uncertainty research, and more generally value-alignment research,  often makes strong, oversimplifying assumption regarding ethical theories; in philosophical terms, ethical theories are represented `thinly' \cite{sep-thick-ethical-concepts,macaskill2020moral}. That is, moral theories are merely seen as utility functions or preference orderings. Such representations ignore the `thick' concepts/commitments these ethical theories subscribe to: \emph{how} and \emph{why} these distinct theories support ethical evaluations. Ignoring the `how' is problematic as many ethical theories have fundamentally different algorithmic properties, as discussed in this paper. Moreover, ignoring the `why' is problematic as ethical theories, such as deontological ethical theories, cannot be properly represented by utility functions or preference orderings \cite{sep-ethics-deontological,macaskill2020moral}. In short, the very need to consider contextual factors arises from the limitations of thin representations of moral theories. In this sense, our paper can be viewed as an attempt to `thicken' ethical evaluations derived from simulations. An alternative -- and potentially complementary -- approach would be to account for inherently thicker rationales for ethical evaluations such as those elicited by dialogue and argument. The Sally-PAL  dialogue illustrates extraction of such richer ethical information than that captured through preferences or utility functions alone.

\bigskip
\bibliography{mybibfile}

\end{document}